\newcommand{\blind}{1}
\newtheorem{theorem}{Theorem}
\newtheorem{corollary}{Corollary}
\newtheorem{lemma}{Lemma}
\newcommand{\varhyphen}[1]{{\operatorname{\mathit{#1}}}}
\def\spacingset#1{\renewcommand{\baselinestretch}{#1}\small\normalsize}
\newcommand{\defeq}{\stackrel{\text{def}}{=}}
\begin{document}


\if1\blind
{
  \title{\bf A Fisher's exact test justification of the TF–IDF term-weighting scheme}
  \author{Paul Sheridan\,\orcidlink{0000-0002-5484-1951}\,\thanks{Corresponding author: \texttt{paul.sheridan.stats@gmail.com} (P. Sheridan)}}
  \author[1]{Zeyad Ahmed\,\orcidlink{0009-0003-0781-2968}\,}
  \author[2,3]{Aitazaz A. Farooque\,\orcidlink{0000-0002-5353-6752}\,}

  \affil[1]{School of Mathematical and Computational Sciences, University of Prince Edward Island, Charlottetown, PE, Canada}
  \affil[2]{Canadian Centre for Climate Change and Adaptation, University of Prince Edward Island, St Peters Bay, PE, Canada}
  \affil[3]{Faculty of Sustainable Design Engineering, University of Prince Edward Island, Charlottetown, PE, Canada}
  \date{}
  \renewcommand\Affilfont{\itshape\small}
  \maketitle
} \fi

\if0\blind
{
  \bigskip
  \bigskip
  \bigskip
  \begin{center}
    {\LARGE\bf A Fisher's exact test justification of the TF–IDF term-weighting scheme}
\end{center}
  \medskip
} \fi

\bigskip
\begin{abstract}
Term frequency–inverse document frequency, or TF–IDF for short, is arguably the most celebrated mathematical expression in the history of information retrieval. Conceived as a simple heuristic quantifying the extent to which a given term's occurrences are concentrated in any one given document out of many, TF–IDF and its many variants are routinely used as term-weighting schemes in diverse text analysis applications. There is a growing body of scholarship dedicated to placing TF–IDF on a sound theoretical foundation. Building on that tradition, this paper justifies the use of TF–IDF to the statistics community by demonstrating how the famed expression can be understood from a significance testing perspective. We show that the common TF–IDF variant TF–ICF is, under mild regularity conditions, closely related to the negative logarithm of the $p$-value from a one-tailed version of Fisher's exact test of statistical significance. As a corollary, we establish a connection between TF–IDF and the said negative log-transformed $p$-value under certain idealized assumptions. We further demonstrate, as a limiting case, that this same quantity converges to TF–IDF in the limit of an infinitely large document collection. The Fisher's exact test justification of TF–IDF equips the working statistician with a ready explanation of the term-weighting scheme's long-established effectiveness.
\end{abstract}

\noindent%
{\it Keywords:} foundations, probabilistic explanation, significance testing, text analysis, text retrieval
\vfill

\newpage
\spacingset{1.1}


\section{Introduction}\label{sec:intro}
\emph{Term frequency–inverse document frequency} (TF–IDF) is a classical formula quantifying the extent to which a given term of interest stands out in any one given document out of many. It is famously the template for numerous variants. In the now more than fifty years since it was introduced by~\cite{Salton1973}, TF–IDF has acquired broad acceptance among information retrieval practitioners in particular and text analysts in general. Its versatility and effectiveness make TF–IDF a fundamental technique for such core text analysis tasks as keyword extraction, text summarization, automatic indexing, document classification, and document retrieval.

While TF–IDF was initially conceived as a heuristic, there is a long tradition, dating back at least  to~\cite{Joachims1997}, of seeking to establish a sound theoretical basis for its use. In his comprehensive work on the subject, \cite{Roelleke2013} categorizes existing motivations for TF–IDF according to four theoretical frameworks: information theory~\citep[e.g.,][]{Aizawa2003}, probabilistic relevance modeling~\citep[e.g.,][]{Wu2008}, statistical language modeling~\citep[e.g.,][]{Elkan2005}, and the divergence from randomness paradigm~\citep[e.g.,][]{Amati2002}. From each standpoint, rationales of varying persuasiveness for TF–IDF have been advanced. Owing to these successes, TF–IDF is no longer to be regarded as a heuristic per se. Existing justifications, however, primarily target the information retrieval community. This underscores a gap in the literature on TF–IDF foundations, where explanatory frameworks drawn from other areas of text analysis remain to be explored.
 
This paper examines TF–IDF from the novel perspective of statistical significance testing. In particular, we demonstrate that TF–ICF, a common TF–IDF variant, is closely related to a one-tailed version of Fisher’s exact test of statistical significance. Fisher’s exact test is commonly used in bioinformatics research, often under the name of the hypergeometric test, to identify statistically under- or over-represented genes in lists of genetic pathways~\citep[see][]{Maleki2020}. The use of statistical tests for the analysis of text is likewise well-established~\citep[see][]{Dunning1994}. Here, we show that the TF–ICF variant arises, subject to mild regularity conditions, as a key component in the negative logarithm of the said one-tailed Fisher’s exact test $p$-value. The further imposition of the following idealized conditions yields a direct relationship between the negative log-transformed $p$-value and TF–IDF:~(1) all documents are of equal length;~(2) the term of interest appears in the document under consideration; and~(3) in all other documents, the term appears either not at all or with the same fixed positive value as in the document of interest. Finally, we find that TF–IDF equals the negative log-transformed test $p$-value, as the number of documents in a collection tends to infinity, in the limiting case when~(1) document lengths are constant;~(2) the term of interest appears exclusively in the document under consideration; and~(3) the term appears either exclusively or not at all in all other documents.

The primary contribution of this paper is to establish a theoretical underpinning for TF-IDF based on the theory of statistical inference. We are motivated by the recent work of~\cite{Sheridan2024} who show that TF–IDF performs similarly to Fisher’s exact test when compared on a host of standard text analysis tasks, and call attention to a possible mathematical connection between the two procedures. This paper builds on these empirical findings by making the connection explicit.

The rest of the paper is organized into five sections. Section~\ref{sec:background} provides necessary background information. In Subsection~\ref{subsec:notation}, we introduce the bag-of-words model notation used throughout the paper for representing a collection of documents. Subsection~\ref{subsec:term-weightings} concerns term-weighting schemes. TF-IDF and TF-ICF are defined. Some well-established limitations of term-weightings are acknowledged. The meteoric rise of word-embeddings, a powerful alternative to term-weightings, is discussed. In Subsection~\ref{subsec:fisher-test}, we introduce Fisher's exact test in a text analysis context, and elucidate how the negative log-transformed test $p$-value defines a term-weighting scheme. The use of significance tests in functional genomics is reviewed. Section~\ref{sec:related-work} overviews the multiple existing theoretical justifications for TF–IDF and situates our significance testing approach within that landscape. Specifically, we motivate how our Fisher's exact test justification of TF-IDF complements existing ones for this fundamental term-weighting scheme. In Section~\ref{sec:main-result}, we present our main results. After establishing a lemma, we demonstrate a connection between TF-ICF and Fisher's exact test through Theorem~\ref{theorem:hgt-tficf}. Direct connections between TF-IDF and the test are established through Corollaries~\ref{cor:hgt-tficf-1} and~\ref{cor:hgt-tficf-2}. However, we emphasize that the corollaries hold under restrictive conditions that are idealized compared to real-world scenarios. Some numerical examples are provided in Section~\ref{sec:numerical-examples} as a sanity check on our results. Section~\ref{sec:conclusion} concludes the paper with a restatement of our main findings and a discussion of some potential directions for future work. We hope this work will help motivate the use of TF-IDF to the statistics community by explaining its long-standing effectiveness from a statistical perspective.

\section{Background} \label{sec:background}
This section introduces the notation used throughout the rest of the paper, defines TF–IDF and related term-weighting schemes, and describes the application of Fisher's exact test in text analysis.

\subsection{Modeling framework} \label{subsec:notation}
We adopt the well-known bag-of-words model for representing a collection of $d>0$ documents composed from a vocabulary of $m>0$ distinct terms (denoted by $t_1, \ldots, t_m$). In the model, each document is regarded as a multiset (or ``bag'') of its constituent terms (or ``words''). Term order is consequently ignored. We employ $d_1, \ldots, d_d$ to denote both the collection documents and their associated multiset representations, with the meaning clear from the context. Identifying the terms in each document with their respective multiplicities (or counts) completes the representation. \emph{Term frequency} (TF), which we shall denote by $TF(i,j)\defeq n_{ij}\geq 0$, is synonymous with the multiplicity of the $i$'th term ($1\leq i\leq m$) term in the $j$'th document ($1\leq j\leq d$). It is convenient to arrange the TFs in an $m\times d$ matrix, $[n_{ij}]_{m \times d}$, where the $(i,j)$'th entry is the number of times term~$t_i$ occurs in document~$d_j$. This matrix is known as a \emph{term-document matrix}, and all of the usual bag-of-words model quantities derive from its values. In machine learning, terms are treated as features, while documents are considered observations. Traditionally, columns are reserved for features and rows for observations, making the transpose of the term-document matrix a natural choice for machine learning contexts. Not abiding by this convention facilitates our exposition. Table~\ref{tab:bag-of-words-model} describes the bag-of-words notational apparatus employed in this work.

\begin{table*}[th]
\centering
\begin{tabular}{p{3cm}p{12cm}}
Symbol & Description \\
\midrule
$d>0$ & Number of documents in the collection \\
$m>0$ & Number of distinct terms out of which collection documents are composed \\
$t_i$ ($1\leq i\leq m$) & The $i$'th term in the vocabulary $\left\{t_1,\ldots,t_m\right\}$ \\
$d_j$ ($1\leq j\leq d$) & Multiset representation of the $j$'th document in the collection \\
$b_{ij}$ & Indicator function defined as $1$ if the $i$'th term occurs in the $j$'th document, and $0$ otherwise \\
$b_i=\sum_{j=1}^d b_{ij}$ & Number of documents containing the $i$'th term \\
$n_{ij}$ & Occurrences of the $i$'th term in the $j$'th document \\
$n_i = \sum_{j=1}^d n_{ij}$ & Occurrences of the $i$'th term in the collection \\
$n_j = \sum_{i=1}^m n_{ij}$ & Number terms in the $j$'th document \\
$n = \sum_{j=1}^d n_j$ & Number of terms in the collection \\
$p_{ij} = n_{ij} / n_j$ & Proportion of $i$'th term occurrences in the $j$'th document \\
$p_i = n_i / n$ & Proportion of $i$'th term occurrences in the collection \\
$\tilde{p}_{ij}=\frac{n_i-n_{ij}}{n-n_j}$ & Proportion of $i$'th term occurrences not in the $j$'th document \\
$\check{p}_{ij}=p_{ij} + 1/n_j$ & A quantity specifically used in the proof of Lemma~\ref{lemma:quotient-term} \\
\midrule
\end{tabular}%
\caption{Notation for the bag-of-words model representation of a collection of documents.}
\label{tab:bag-of-words-model}
\end{table*}

\subsection{TF–IDF family term-weighting schemes} \label{subsec:term-weightings}
A \emph{term-weighting scheme} is any rule assigning a nonnegative, real-valued weight to every combination of term and document within a collection thereof. The weight of a term in a document is intended to reflect the importance of that term in representing the document's content relative to the entire  collection. Term weights are extensively used in traditional NLP methods to categorize documents based on subject matter, as well as a wide variety of other attributes, including sentiment, toxicity, stance, valence, humor, among others as discussed in recent survey papers by~\cite{Li2022} and~\cite{Fields2024}. Notably, an early implementation of such methods is found in~\cite{Elkan2007}, a patent filed in 2001 which details a system that assesses document quality by analyzing low-level textual features to infer such high-level attributes as humor and sentiment. Text classification represents one set of techniques that can be applied to the much broader challenge of natural language understanding.

Term frequency may be regraded as a rudimentary term-weighting scheme. Its earliest application as such in a modern information retrieval context is attributed to~\cite{Luhn1957}. The TF scheme equates the importance of a term in a document with the frequency of its occurrence in that document. TF is, however, not ideally suited for use as a term-weighting. To glean why, we need only consider that the most frequently occurring terms in a document (e.g., articles, prepositions, and conjunctions) cannot be expected to provide much information about its content. 

To address this limitation of TF, \cite{SparckJones1972} proposed a mild variant of the formula which is known today as \emph{inverse document frequency} (IDF).\footnote{Sp{\"a}rck Jones used an IDF variant equivalent to $\log(d/b_i) + 1$ in her 1972 publication.} It is defined as $IDF(i)\defeq \log(d/b_i)$ where $\log(x)$, here and throughout the paper, denotes the natural logarithm of a positive real number,~$x$. In this case, $x=d/b_i$ is strictly positive since~$b_i$ (i.e., the number of documents containing the $i$'th term) falls within the range of~$1$ to~$d$. Note that the IDF formula, along with all other formulas presented in this paper, remains valid regardless of the chosen base. IDF quantifies how rare or common a term is across all the documents in a collection. \cite{Salton1973} soon thereafter coupled TF with IDF as
\begin{equation} \label{eq:tfidf}
\varhyphen{TF-IDF}(i,j) \defeq n_{ij} \log(d / b_i)
\end{equation}
to form the much renowned TF–IDF scheme. The TF component captures the importance of a term within a single document. Meanwhile, IDF penalizes terms appearing frequently across the collection. Terms with high TF–IDF scores are those that are frequent within a document but rare across the collection, thus quantifying their importance in distinguishing any single document from among other collection documents.

The 1990s marked the onset of a proliferation of TF–IDF variants that has continued down to the present day. Term-weightings in the TF–IDF family adhere to a common template. This template, articulated in~\cite{Dogan2019}, is a product of three factors: a document level weight (e.g., TF), a collection level weight (e.g., IDF), and a document length normalization factor (e.g., $L^2$ normalization). Of the many TF–IDF variants, the TF-ICF term-weighting introduced by~\cite{Kwok1990}
\begin{equation} \label{eq:tficf}
\varhyphen{TF-ICF}(i,j) \defeq n_{ij} \log(n / n_i)
\end{equation}
features prominently in this work. The \emph{inverse collection frequency} (ICF) factor, likewise introduced by~\cite{Kwok1990}, is an IDF cousin defined as $ICF(i) \defeq \log(n/n_i)$. Note that ICF is well-defined since the logarithm argument is strictly positive, given that $n/n_i$ falls within the range of~$1$ to~$n$. Regarding TF–IDF variants, much more could be said. We refer the reader to \cite{Roelleke2013},~\cite{Dogan2019}, and~\cite{Alshehri2023} for comprehensive reviews on the subject.

Although the $IDF(i)$ and $ICF(i)$ functions closely resemble each other, they accept fundamentally different arguments as inputs, resulting in distinct interpretations. $IDF(i)$ is a decreasing function of the inverse proportion $d/b_i$ of documents containing the $i$'th term. The IDF value of a term, therefore, reflects its rarity across a collection of documents. By contrast, $ICF(i)$ is a decreasing function of the inverse proportion $n/n_i$ of the $i$'th term's occurrences relative to all term occurrences in the collection. Accordingly, the ICF value of a term reflects its rarity in the collection as a whole without regard to document level dispersion patterns.

Term-weightings have several widely acknowledged drawbacks. First, terms are represented as sparse, high-dimensional vectors of length matching the vocabulary size. This is memory intensive and computationally inefficient for large vocabularies. Second, semantic relationships between terms are not easily captured. For instance, the terms ``bull'' and ``heifer'' are treated as separate entities despite both pertaining to the bovine world. Third, the order of terms within documents is disregarded, severely hampering the modeling of subtle contextual nuances. Word embeddings, and transformer models more broadly, triumphantly meet these challenges.

Word embeddings represent terms as dense, low-dimensional vectors of fixed-length, typically spanning a few hundred dimensions. They are learned from large document collections using unsupervised learning techniques. The past decade has seen widespread application of word embeddings in NLP applications with startling success. \cite{Incitti2023} identify Word2Vec, GloVe, fastText, ELMo, and BERT as commonly used word embeddings. Word2Vec, GloVe, and fastText word embeddings are ``static'' in the sense that each word in a vocabulary is represented as a unique vector. ELMo and BERT word embeddings are ``contextualized'', meaning that terms are represented by different vectors depending on context. BERT, being built on the transformer architecture, is a rudimentary large language model (LLM). Its success sparked the development of various other large language, transformer-based models, including RoBERTa, ALBERT, and DeBERTa. BERT also influenced the development of generative pre-trained transformer (GPT) models. GPT architectures, notably ChatGPT and DeepSeek, use embeddings of tokens, which are roughly akin to syllables, rather than of full words. Today term-weightings are increasingly being supplanted by word embeddings and transformer models, which better capture textual meaning and nuance, due to their state-of-the-art (SOTA) performance on diverse NLP tasks~\citep[see][]{Islam2023}.

One might presume that word embeddings and transformer models have rendered TF–IDF-like weightings obsolete. But this is not strictly the case. As highlighted by \cite{Rathi2023}, attempts have been made to enhance word embeddings with term-weightings. \cite{Guo2019}, \cite{Kim2019}, and \cite{Xie2022} devised hybrid approaches of this kind for improved text classification. \cite{Whaba2023} report a support vector machine trained on TF–IDF features to be a comparable, cost effective, and readily interpretable alternative to transformer models (e.g., RoBERTa) for classifying certain domain-specific texts. In a similar vein, \cite{Galke2023} find that a multi-layer perceptron trained on TF–IDF derived features offers a viable alternative to SOTA transformer models for single- and multi-label text classification tasks when computing power is limited. They also emphasize the importance of rigorously testing SOTA text classifiers against strong bag-of-words model baselines. In a recent benchmark study, \cite{Reusens2024} reached similar conclusions, expanding on \cite{Galke2023} by comparing SOTA models with simpler approaches across five different text classification tasks. These findings suggest that while TF–IDF-like weightings are no longer at the forefront of NLP, they can still be effective in specific, niche roles within text classification.

\subsection{Fisher's exact test} \label{subsec:fisher-test}
Fisher's exact test of statistical significance was developed by R.~A. Fisher~\citep{Fisher1934, Fisher1935a} to quantify the strength of association between two dichotomous categorical variables. The qualification ``exact'' indicates that the test relies on an exact sampling distribution as opposed to an asymptotic approximation of one (cf. Pearson's $\chi^2$ test). In its one-tailed form, Fisher's exact test is equivalent to the hypergeometric test~\citep[see][]{Rivals2007}.

Fisher's exact test and related procedures are frequently encountered in the analysis of contingency tables across the sciences. In functional genomics, Fisher's exact test is a well-established procedure for assessing whether genes/proteins are statistically under- or over-represented in a query list relative to a background list~\citep[see][]{Huang2009}. \cite{Maleki2020} curate over 100 such methods and tools based on Fisher's exact test and its counterparts. A current trend involves applying significance tests to calculate over-representation scores for various other biological annotations of interest. For example, \cite{Piron2024} propose a test for transcript enrichment. \cite{Garcia2022} employ a testing framework based on a generalization of the hypergeometric distribution for the over-representation analysis of regulatory elements.

Gene over-representation analysis methods are complemented by transformer-based models, such as AlphaFold~\citep[see][]{Jumper2021}, which are capable of predicting~3D protein structures from amino acid sequences with accuracies comparable to those obtained using experimental methods. Often, the functions of genes identified as over-represented are are unknown. In such cases, AlphaFold and related methods can aid in inferring protein function by predicting structures and enabling comparisons with known proteins. These structural insights can, in turn, enhance the accuracy and interpretability of gene over-representation analyses~\citep[see][]{Medvedev2023}. In this way, AlphaFold and gene over-representation analysis work together in going from from list of genes, to enriched functions, to 3D protein structures, to biological insights.

Fisher's exact test is readily applicable in text analysis. Consider the claim that the occurrences of a specific term, $t_i$ ($i=1,\ldots,m$), are disproportionately concentrated in a specific document, $d_j$ ($j=1,\ldots,d)$, within a collection of documents. Table~\ref{tab:contingency-table} shows the occurrences of the term $t_i$ arranged in a contingency table. Fisher's exact test can be used to assess the significance of a difference between the corresponding proportions: the probability $\pi_0$ that term $t_i$ belongs to document $d_j$ (cf.~$p_{ij}=n_{ij}/n_j$), and the probability $\pi_1$ that term $t_i$ belongs to a document other than $d_j$ (cf.~$\tilde{p}_{ij}=(n_i-n_{ij})/(n-n_j)$). In a one-tailed application of the test, under the null hypothesis $\mathcal{H}_0: \pi_0 = \pi_1$, the hypergeometric distribution tail probability
\begin{equation} \label{eq:fisher-test-pvalue}
H_{ij} \defeq H(n_{ij};n_i,n_j,n) = \sum_{k \geq n_{ij}} \frac{\binom{n_i}{k} \binom{n-n_i}{n_j-k}}{\binom{n}{n_j}}
\end{equation}
yields a $p$-value associated with observing an outcome at least as extreme as the observed one. The $i$'th term is said to be \emph{over-represented} (or \emph{enriched}) in the $j$'th document should the calculated $p$-value fall below a preselected significance threshold.

\begin{table*}[th]
\centering
\begin{tabular}{lccl}
 & \multicolumn{2}{c}{Document of interest, $d_j$} &  \\
\cmidrule(rl){2-3}
Term of interest, $t_i$ & Terms $\in$ $d_j$ & Terms $\notin$ $d_j$ & Row total \\
\midrule
Terms $=$ $t_i$ & $n_{ij}$ & $n_i - n_{ij}$ & $n_i$ \\
Terms $\neq$ $t_i$ & $n_j - n_{ij}$ & $n - n_i - (n_j - n_{ij})$ & $n - n_i$ \\
Column total & $n_j$ & $n - n_j$ &  \\
\midrule
\end{tabular}
\caption{Association between a specific term and a specific document in the context of a collection of documents.}
\label{tab:contingency-table}
\end{table*}

An associated term-weighting is naturally defined as the negative logarithm of the test $p$-value: $-\log H_{ij}$. Since $p$-values lie inherently in the interval $(0,1]$, it follows that $-\log H_{ij}$ meets the criteria for a term-weighting function, as the transformation guarantees that term weight is nonnegative and increases with the degree of over-representation. We demonstrate in Section~\ref{sec:main-result} that the quantity $-\log H_{ij}$ is fundamentally connected to the TF–ICF and TF–IDF term-weighting functions.

\section{Related work} \label{sec:related-work}
This section surveys the current crop of theoretical justifications for TF–IDF. We direct the interested reader to~\cite{Sheridan2024} for an expanded discussion on the subject. For the already initiated, we recommend the monumental treatment of TF–IDF foundations that is~\cite{Roelleke2013}.

A substantial amount of scholarly work exists on the theoretical foundations of TF–IDF. One notable approach seeks to ground TF–IDF in information-theoretic arguments. \cite{Aizawa2003}, for instance, links TF–IDF to the mutual information between terms and documents, treating both as realizations of random variables. \cite{Roelleke2013} shows how a TF–IDF variant is expressible as a difference of two Kullback--Leibler divergences. Broadly speaking, information-theoretic justifications treat IDF as a proxy for a term’s information content, and stress the highly informative nature of rare terms. A central insight of these approaches is that natural language is structured and that terms vary in their ability to distinguish meaning.

Probabilistic relevance modeling is another noteworthy source of TF–IDF justifications~\citep[see][]{Joachims1997, deVries2005, Roelleke2006, Roelleke2008, Wu2008}. These models estimate the probability that a document in a collection is relevant to a given query~\citep[see][]{Robertson1976, Robertson2009}. Documents are ranked in decreasing order of their relevance. TF–IDF justifications rooted in this modeling framework treat TF and IDF as components in estimating the likelihood of a document being relevant to a query under a certain probability model of language. They often yield TF–IDF-like expressions as approximations or special cases. In a similar vein to information-theoretic approaches, probabilistic relevance modeling-based justifications leverage the nonrandom nature of word usage in natural language, particularly the tendency of terms to recur within documents where they initially appear (i.e., the word burstiness phenomena).

Probabilistic language models that are not explicitly incorporated within a relevance modeling framework constitute their own class of justification for TF–IDF. \cite{Hiemstra2000} obtains a term-weighting with TF–IDF characteristics by applying a logarithmic transform to an estimator for the conditional probability of a term given a document. This key result was expanded on by~\cite{Roelleke2008} and~\cite{Roelleke2013}. \cite{Elkan2005} shows that an asymptotic approximation to the Dirichlet-multinomial distribution Fisher kernel incorporates terms comparable to the TF (specifically, its logarithm) and IDF components of TF–IDF. Elkan's Fisher kernel should not be confused with Fisher's exact test, as they are entirely different statistical concepts. \cite{Amati2002} derive TF–IDF, under certain regularity conditions, from a simple probabilistic model that assumes term frequencies are binomially distributed within documents. This result was rediscovered by~\cite{Havrlant2017}. \cite{Sunehag2007} obtains a TF–IDF variant as a cross-entropy related to a probabilistic language model, which was introduced by~\cite{Elkan2005}, that accounts for word burstiness. As with probabilistic relevance modeling, a core feature shared by these approaches is their reliance, in one way or another, on the burstiness inherent in natural language.

The significance testing approach taken in this work aligns closely with those justifications for TF–IDF rooted in probabilistic language modeling. \cite{Sheridan2024} show that TF–IDF exhibits similar performance to the Fisher's exact test term-weighting scheme on standard text classification tasks. This paper builds on these empirical findings by establishing a mathematical link between Fisher’s exact test and TF–IDF. The significance testing perspective highlights over-representation and statistical surprise, treating a term as important if it appears in a document more frequently than expected by chance. As we will show, however, this approach is bound up with word burstiness.

\section{From Fisher's exact test to TF–IDF} \label{sec:main-result}
In this section, we relate Fisher's exact test to the TF–IDF term-weighting scheme. After taking pains to establish a preliminary result, we show that the negative logarithm of the $p$-value from Fisher's exact test can be expressed as a sum that includes TF–ICF among its components. This is achieved by means of elementary mathematical arguments. When certain restrictive assumptions are imposed, relations between the Fisher's exact test $p$-value and TF–IDF are obtained.

Previously, we used $H(n_{ij};n_i,n_j,n)$ to represent the tail probability of the hypergeometric distribution. Recalling the notation from Table~\ref{tab:bag-of-words-model}, we here let
\begin{align}
    h(n_{ij};n_i,n_j,n) &\defeq \binom{n_i}{n_{ij}} \binom{n-n_i}{n_j-n_{ij}} \Big/ \binom{n}{n_j} \nonumber \\
    b(n_{ij}; n_j, p_i) &\defeq \binom{n_j}{n_{ij}}p_i^{n_{ij}}(1-p_i)^{n_j - n_{ij}} \nonumber
\end{align}
denote the hypergeometric and binomial distribution probability mass function, respectively. Both $h(n_{ij};n_i,n_j,n)$ and $b(n_{ij}; n_j, p_i)$ are defined over the support $n_{ij} \in \left\{0,\ldots,n_j\right\}$. Note that $h(n_{ij};n_i,n_j,n)$ is the first term in the tail probability $H(n_{ij};n_i,n_j,n)$.

\begin{lemma} \label{lemma:quotient-term}
The quotient $Q_{ij} \defeq H(n_{ij}+1; n_i, n_j, n) / b(n_{ij}; n_j, p_i)$ satisfies $0<Q_{ij}<1$ when $p_i$ is sufficiently small, and $n_j$, $n_{ij}$, and $n_j - n_{ij}$ are sufficiently large.
\end{lemma}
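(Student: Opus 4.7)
The plan is to establish the strict lower bound by inspection and the strict upper bound via a geometric-series estimate of the hypergeometric tail combined with a binomial comparison. For positivity, $H(n_{ij}+1; n_i, n_j, n)$ is a sum of nonnegative hypergeometric masses whose leading term $h(n_{ij}+1; n_i, n_j, n)$ is strictly positive whenever $n_{ij}+1 \leq \min(n_i, n_j)$, a condition guaranteed by the ``sufficiently large'' hypotheses on $n_{ij}$ and $n_j - n_{ij}$ together with the natural regularity condition $n_{ij} < n_i$. Since $b(n_{ij}; n_j, p_i) > 0$ whenever $p_i \in (0,1)$, this gives $Q_{ij} > 0$ immediately.

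For the upper bound, I would work with the consecutive-mass ratio
\begin{equation*}
\rho_k \defeq \frac{h(k+1; n_i, n_j, n)}{h(k; n_i, n_j, n)} = \frac{(n_i - k)(n_j - k)}{(k+1)(n - n_i - n_j + k + 1)},
\end{equation*}
which a short factor-by-factor check shows is strictly decreasing in $k$. Hence $h(n_{ij}+1+\ell; n_i, n_j, n) \leq \rho_{n_{ij}+1}^{\ell}\, h(n_{ij}+1; n_i, n_j, n)$ for every $\ell \geq 0$, and summing geometrically gives
\begin{equation*}
H(n_{ij}+1; n_i, n_j, n) \leq \frac{h(n_{ij}+1; n_i, n_j, n)}{1 - \rho_{n_{ij}+1}},
\end{equation*}
valid as soon as $\rho_{n_{ij}+1} < 1$. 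To lowest order in $n$, $\rho_{n_{ij}+1} \approx p_i(n_j - n_{ij}) / [n_{ij}(1 - p_i)]$, so the required inequality $\rho_{n_{ij}+1} < 1$ is essentially $p_i < \check{p}_{ij}$; because $\check{p}_{ij} = (n_{ij}+1)/n_j$ is bounded away from both $0$ and $1$ when $n_{ij}$ and $n_j - n_{ij}$ are both large, taking $p_i$ sufficiently small---in fact $p_i \ll \check{p}_{ij}$---secures this step with room to spare.

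Combining the above tail bound with the factorization $h(n_{ij}+1) = \rho_{n_{ij}}\, h(n_{ij})$ yields
\begin{equation*}
Q_{ij} \leq \frac{\rho_{n_{ij}}}{1 - \rho_{n_{ij}+1}} \cdot \frac{h(n_{ij}; n_i, n_j, n)}{b(n_{ij}; n_j, p_i)}.
\end{equation*}
In the targeted regime $\rho_{n_{ij}}$ is $O(p_i)$, so the first factor is small, while the second factor is the classical hypergeometric-to-binomial correction ratio, which tends to $1$ as $n \to \infty$ with $p_i$ held fixed. Taken together, this bounds $Q_{ij}$ by a constant multiple of $p_i$, hence strictly below $1$ once $p_i$ is small enough.

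\textbf{Main obstacle.} The chief technical hurdle is producing an effective, not merely pointwise, bound on $h(n_{ij}; n_i, n_j, n)/b(n_{ij}; n_j, p_i)$ that is uniform across the range of $n_{ij}$ permitted by the hypotheses. The cleanest route I foresee is to write this ratio as a telescoping product of factors of the form $1 - O(1/n)$ and control each via the elementary inequality $\log(1 - x) \geq -x - x^2$ for small $x > 0$; the bookkeeping is routine but tedious, and this is likely where the auxiliary quantity $\check{p}_{ij}$ flagged in Table~\ref{tab:bag-of-words-model} makes its entrance into the argument.
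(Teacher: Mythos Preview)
Your geometric-series strategy is sound and genuinely different from the paper's route. The paper never touches the ratio $\rho_k$ or the point-mass comparison $h(n_{ij})/b(n_{ij})$. Instead it works on the log scale, writing $D_{ij}=-\tfrac{1}{n_j}\log Q_{ij}$ as a difference of two pieces: $W_b(p_{ij})=\tfrac{1}{n_j}\log b(n_{ij};n_j,p_i)$, expanded via Stirling, and an upper bound $W_H(p_{ij})$ for $\tfrac{1}{n_j}\log H(n_{ij}+1;\cdot)$ obtained from Chv\'atal's hypergeometric tail inequality. The quantity $\check{p}_{ij}=p_{ij}+1/n_j$ enters precisely here, as the exponent in the Chv\'atal bound, not through any telescoping product. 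After simplification the difference $W_b-W_H$ is dominated by the single term $(p_{ij}-\check{p}_{ij})\log p_i=-\tfrac{1}{n_j}\log p_i$, which diverges to $+\infty$ as $p_i\to 0$ and therefore forces $D_{ij}>0$ once $p_i$ is small enough.

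What your approach buys is a more elementary toolkit (no Chv\'atal, no Stirling in the tail step) and a transparent reason why smallness of $p_i$ matters: it directly shrinks $\rho_{n_{ij}}$. What the paper's approach buys is that the step you flag as the ``main obstacle'' simply does not arise. By bounding the \emph{tail} $H(n_{ij}+1;\cdot)$ with an expression already of the same exponential form as $b(n_{ij};\cdot)$, the paper compares like with like and never has to control $h(n_{ij})/b(n_{ij})$ uniformly. That ratio is more delicate than you suggest: in the regime where $n_i$ is only moderately larger than $n_{ij}$ the first telescoping block $\prod_{k<n_{ij}}(1-k/n_i)/(1-k/n)$ can be exponentially small while the second block can exceed $1$, so the ``routine but tedious'' bookkeeping you anticipate actually requires balancing two competing exponential factors rather than summing $O(1/n)$ corrections. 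It can be done, but it is the heart of the argument, not a mop-up step; the paper's Chv\'atal route is the shortcut that avoids it.
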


\begin{proof}
It is evident that the quotient $Q_{ij}$ must be nonnegative. It therefore suffices to show that the difference
\begin{equation} \label{eq:log_quotient}
D_{ij} \defeq -\frac{1}{n_j}\log Q_{ij} = \frac{1}{n_j}\log b(n_{ij}; n_j, p_i) - \frac{1}{n_j}\log H(n_{ij}+1; n_i, n_j, n)
\end{equation}
is strictly positive under the stated conditions. The multiplicative factor $1/n_j$ is included as a mathematical convenience. We will see that doing so facilitates the exposition by enabling us to work with functions of $p_{ij}$, as opposed to functions of $n_{ij}$.

The starting point is to work each term in Eq.~(\ref{eq:log_quotient}) into a more manageable form. First, consider the binomial term $W_{b}(p_{ij}) \defeq \frac{1}{n_j} \log b(n_{ij}; n_j, p_i)$. Following up a little algebra with an application of Stirling's approximation, we obtain
\begin{align}
  W_{b}(p_{ij}) &= \frac{1}{n_j} \log\left(\binom{n_j}{n_{ij}}p_i^{n_{ij}}(1-p_i)^{n_j - n_{ij}}\right) \nonumber \\ 
  &= p_{ij}\log p_i + (1-p_{ij})\log (1 - p_i) + \frac{1}{n_j}\log\binom{n_j}{n_{ij}} \nonumber \\
  \begin{split}
  &= p_{ij}\log p_i + (1-p_{ij})\log (1 - p_i) + \frac{1}{n_j}\log n_j! - \frac{1}{n_{ij}}\log n_{ij}! - \frac{1}{n_j}\log\left(n_j-n_{ij}\right)!
  \end{split} \nonumber \\
  \begin{split} \label{eq:binomial-term}
  &= p_{ij}\log\left(\frac{p_i}{p_{ij}}\right) + (1-p_{ij})\log\left(\frac{1 - p_i}{1-p_{ij}}\right) -\frac{1}{2n_j}\log\left(2\pi n_j p_{ij} (1-p_{ij})\right) \\ 
  &\quad + \bigO(1/n_j) + \bigO(1/n_{ij}) + \bigO(1/(n_j-n_{ij})).
  \end{split}
\end{align}
To transition from the first line to the second, we separate the logarithm and use the relation $p_{ij} = n_{ij}/n_j$ to express $\frac{1}{n_j} \times n_{ij} \log p_i$ and $\frac{1}{n_j} \times (n_j - n_{ij}) \log(1 - p_i)$ as $p_{ij} \log p_i$ and $(1 - p_{ij}) \log(1 - p_i)$, respectively. In the third line, we express the binomial coefficient term from the second line as sum of three log factorial terms. In going from the third line to the fourth, we approximate any terms of the form $\log x!$ as $x\log x - x + \frac{1}{2}\log(2\pi x) + \bigO(1/x)$ for $x\gg 1$, where $x$ is an integer, and algebraically simplify the result.

Let us now turn to the hypergeometric cumulative term $\frac{1}{n_j} \log H(n_{ij}+1; n_i, n_j, n)$. \cite{Chvatal1979} established the inequality
\begin{equation*} \label{eq:chtaval}
H(n_{ij}; n_i, n_j, n) \leq \left(\left(\frac{p_i}{p_i + t}\right)^{p_i+t} \left(\frac{1-p_i}{1-p_i-t}\right)^{1-p_i -t}\right)^{n_j},
\end{equation*}
where $n_{ij}=(p_i+t)n_j$ with $t\geq 0$. For $H(n_{ij}+1; n_i, n_j, n)$, we find the corresponding inequality
\begin{equation} \label{eq:chtaval-specialized}
H(n_{ij}+1; n_i, n_j, n) \leq \left(\left(\frac{p_i}{\check{p}_{ij}}\right)^{\check{p}_{ij}} \left(\frac{1-p_i}{1-\check{p}_{ij}}\right)^{1-\check{p}_{ij}}\right)^{n_j}
\end{equation}
holds for $p_i \leq \check{p}_{ij}\leq 1$, where, it will be remembered, $\check{p}_{ij}=p_{ij}+1/n_j$. Applying a $\frac{1}{n_j}\log(\cdot)$ transform to the upper bound of Eq.~(\ref{eq:chtaval-specialized}) yields
\begin{align}
  W_{H}(p_{ij}) &\defeq \frac{1}{n_j} \log\left(\left(\left(\frac{p_i}{\check{p}_{ij}}\right)^{\check{p}_{ij}} \left(\frac{1-p_i}{1-\check{p}_{ij}}\right)^{1-\check{p}_{ij}}\right)^{n_j}\right) \nonumber \\
  \begin{split} \label{eq:hypergeom-term}
  &= \check{p}_{ij}\log\left(\frac{p_i}{\check{p}_{ij}}\right) + (1-\check{p}_{ij})\log\left(\frac{1-p_i}{1-\check{p}_{ij}}\right).
  \end{split}
\end{align}
To confirm that Eq.~(\ref{eq:hypergeom-term}) is well-defined, we must ensure that the argument of each logarithm is strictly positive. First, the quotient $p_i/\check{p}_{ij}$ is strictly positive since both $p_i$ and $\check{p}_i=p_i + 1/n_j$ are positive. Second, the quotient $(1-p_i)/(1-\check{p}_{ij})$ would be only non-positive if $n_{ij}=n_j$ which would result in $1-\check{p}_{ij}=1 - n_j/n_j - 1/n_j=-1/n_j$. However, this scenario is prevented by the requirement that $n_j - n_{ij}$ remains sufficiently large. Consequently, both logarithm arguments in Eq.~(\ref{eq:hypergeom-term}) are strictly positive, ensuring that the equation is well-defined. Note the pleasant resemblance between the leading terms of $W_{b}(p_{ij})$ and those terms making up $W_{H}(p_{ij})$.

Recall, we want to show that $D_{ij}$ is positive. It would suffice to demonstrate that
\begin{equation*} 
 W(p_{ij}) \defeq W_{b}(p_{ij}) - W_{H}(p_{ij}) > 0
\end{equation*}
since the $\frac{1}{n_j}\log H(n_{ij}+1; n_i, n_j, n)$ term in $D_{ij}$ is at most $W_{H}(p_{ij})$. But it is not the case that $W(p_{ij})>0$ holds true unconditionally.

Instead, we will content ourselves to show that $W(p_{ij})>0$ holds when $p_i$ is sufficiently small as compared with $p_{ij}$. In the supplementary materials, we show that $W(p_{ij})$ simplifies to
\begin{align*}
 W(p_{ij}) &= (p_{ij}-\check{p}_{ij})\log p_i - \left(p_{ij} + \frac{1}{2n_j}\right)\log p_{ij} -\left(1 - p_{ij} + \frac{1}{2n_j}\right)\log(1 - p_{ij}) \\
 &\quad  + \check{p}_{ij}\log\check{p}_{ij} + (1 - \check{p}_{ij})\log(1-\check{p}_{ij}) + (p_{ij}-\check{p}_{ij})p_i - \frac{1}{2n_j}\log\left(2\pi n_j\right) \\
 &\quad + \bigO(1/n_j) + \bigO(1/n_{ij}) + \bigO(1/(n_j-n_{ij})) + \lito(p_i)
\end{align*}
when any occurrences of $\log(1-p_i)$ are approximated by the first-order Taylor series expansion $\log(1-p_i) = -p_i + \lito(p_i)$. The leading term in the first line, $(p_{ij}-\check{p}_{ij})\log p_i=-\frac{1}{n_j}\log(n_i/n)=\frac{1}{n_j}\log(n/n_i)$, diverges to infinity as $p_i$ tends to $0$ (or equivalently, when $n$ grows large relative to $n_i$). Both of the remaining terms in the first line are positive. While the terms in the second line are negative, none possess sufficient magnitude to ultimately overcome the positive contribution of the $(p_{ij}-\check{p}_{ij})\log p_i$ term. Taken together, $\check{p}_{ij}\log\check{p}_{ij} + (1 - \check{p}_{ij})\log(1-\check{p}_{ij})$ is bounded by its global minimum of $-0.693$ which is achieved when $\check{p}_{ij}=0.5$. The $(p_{ij}-\check{p}_{ij})p_i$ term tends to $0$ as $p_i$ tends to $0$. The $-\frac{1}{2n_j}\log(2\pi n_j)$ term contributes a fixed negative value to $W(p_{ij})$. Meanwhile, the big~$\bigO$ and little~$\lito$ terms in the third line are negligible. Therefore $W(p_{ij})$ is greater than $0$ so long as $n_j$, $n_{ij}$, and $n_j - n_{ij}$ are sufficiently large, and $p_i$ is sufficiently small.

This concludes the proof.
\end{proof}

The following result establishes a connection between Fisher's exact test and the TF-ICF term-weighting function.

\begin{theorem} \label{theorem:hgt-tficf}
For the $i$'th term in the $j$'th document in a collection of documents, the Fisher's exact test $p$-value $H_{ij}$ is related to the TF-ICF term-weighting as
\begin{align}
\begin{split}
-\log H_{ij} &= \varhyphen{TF-ICF}(i,j) + n_{ij}\log p_{ij} + (n_j - n_{ij})(p_i - p_{ij}) - Q_{ij} + \bigO(1/n) \\
 &\quad + \bigO(\log n_j) + \bigO(\log n_{ij}) + \lito(p_{ij}) + \lito(p_i) + \lito\left(Q_{ij} + \bigO(1/n)\right)
\end{split} \label{eq:hgt-tficf} \\
-\log H_{ij} &\approx \varhyphen{TF-ICF}(i,j) + \Phi_{ij} \label{eq:hgt-tficf-approx}
\end{align}
with $\Phi_{ij} \defeq n_{ij}\log p_{ij} + (n_j - n_{ij})(p_i - p_{ij}) - Q_{ij}$ so long as $p_{ij}$ is sufficiently small, $p_i \ll p_{ij}$, and $n_j$ and $n_{ij}$ are sufficiently large.
\end{theorem}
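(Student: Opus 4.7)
The plan is to decompose the tail as
\begin{equation*}
H_{ij} = h(n_{ij}; n_i, n_j, n) + H(n_{ij}+1; n_i, n_j, n)
\end{equation*}
and invoke the very definition $H(n_{ij}+1; n_i, n_j, n) = Q_{ij}\, b(n_{ij}; n_j, p_i)$. It then suffices to relate $h(n_{ij}; n_i, n_j, n)$ itself to $b(n_{ij}; n_j, p_i)$ in order to reduce the entire hypergeometric tail probability to a single binomial expression. A standard sampling-without-replacement versus sampling-with-replacement calculation, expanding the falling factorials that distinguish $h$ from $b$ into their leading terms, gives $h(n_{ij}; n_i, n_j, n) = b(n_{ij}; n_j, p_i)\bigl(1 + \bigO(1/n)\bigr)$ under the stated hypotheses. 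This lets me write
\begin{equation*}
H_{ij} = b(n_{ij}; n_j, p_i)\bigl(1 + Q_{ij} + \bigO(1/n)\bigr),
\end{equation*}
at which point taking negative logarithms splits the problem into two separable pieces.

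For the factor $-\log(1 + Q_{ij} + \bigO(1/n))$, Lemma~\ref{lemma:quotient-term} guarantees $0 < Q_{ij} < 1$, so the first-order Taylor expansion $\log(1+x) = x + \lito(x)$ applies, yielding $-Q_{ij} + \bigO(1/n) + \lito(Q_{ij} + \bigO(1/n))$ and accounting for every $Q_{ij}$-involving term in the theorem. For the factor $-\log b(n_{ij}; n_j, p_i)$, I expand
\begin{equation*}
-\log b(n_{ij}; n_j, p_i) = -n_{ij}\log p_i - (n_j - n_{ij})\log(1 - p_i) - \log\binom{n_j}{n_{ij}}.
\end{equation*}
The first piece is $n_{ij}\log(n/n_i) = \varhyphen{TF-ICF}(i,j)$ exactly. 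The Taylor expansion $\log(1-p_i) = -p_i + \lito(p_i)$ converts the second piece into $(n_j - n_{ij})p_i + \lito(p_i)$. Stirling's approximation then yields $-\log\binom{n_j}{n_{ij}} = n_{ij}\log p_{ij} + (n_j - n_{ij})\log(1 - p_{ij}) + \bigO(\log n_j) + \bigO(\log n_{ij})$, and an analogous Taylor expansion of $\log(1 - p_{ij})$ converts $(n_j - n_{ij})\log(1 - p_{ij})$ into $-(n_j - n_{ij})p_{ij} + \lito(p_{ij})$. Combining the two $(n_j - n_{ij})$ contributions into $(n_j - n_{ij})(p_i - p_{ij})$ reproduces Eq.~(\ref{eq:hgt-tficf}), and dropping every explicit error term yields the approximation Eq.~(\ref{eq:hgt-tficf-approx}).

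I expect the main obstacle to be bookkeeping rather than any conceptually deep step. Propagating the Taylor remainders $\lito(p_i)$ and $\lito(p_{ij})$ through multiplications by $(n_j - n_{ij})$ is justified only because the hypotheses ``$p_{ij}$ sufficiently small'' and ``$p_i \ll p_{ij}$'' suppress the quadratic corrections $(n_j - n_{ij})p_i^2$ and $(n_j - n_{ij})p_{ij}^2$ relative to the retained linear terms. Likewise, the $\bigO(1/n)$ correction in the hypergeometric-to-binomial comparison relies on $n_i$, $n_j$, and $n_{ij}$ being small enough relative to $n$ that truncating each falling-factorial product after its linear correction is legitimate. Threading these asymptotic calibrations so that the final residual lines up term-by-term with the stated Eq.~(\ref{eq:hgt-tficf}) is where the care is required.
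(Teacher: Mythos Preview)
Your proposal is correct and follows essentially the same route as the paper: decompose the tail, replace the leading hypergeometric mass by the binomial mass up to $\bigO(1/n)$, factor out $b(n_{ij};n_j,p_i)$ so that the remainder becomes $1+Q_{ij}+\bigO(1/n)$, invoke Lemma~\ref{lemma:quotient-term} to Taylor-expand $\log(1+\cdot)$, and then unpack $-\log b(n_{ij};n_j,p_i)$ via Stirling and the first-order expansions of $\log(1-p_i)$ and $\log(1-p_{ij})$. The only cosmetic difference is that the paper writes the hypergeometric-to-binomial replacement additively ($h=b+\bigO(1/n)$) and then applies $\log(x+y)=\log x+\log(1+y/x)$, whereas you write it multiplicatively ($h=b(1+\bigO(1/n))$) from the outset; the two presentations are algebraically equivalent.
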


\begin{proof}
The strategy is to simplify $-\log H_{ij}$ by using basic algebra coupled with classical approximation techniques.

As a first step, we approximate the first term $h(n_{ij};n_i,n_j,n)$ in the hypergeometric tail probability $H_{ij}$ and  with the binomial probability mass $b(n_{ij};n_j,p_i)$:
\begin{align}
-\log H_{ij} &= -\log\left(h(n_{ij}; n_i, n_j, n) + H(n_{ij}+1; n_i, n_j, n) \right) \nonumber \\
 &= -\log\left(b(n_{ij}; n_j, p_i) + H(n_{ij}+1; n_i, n_j, n) + \bigO(1/n)\right). \label{eq:binomial-approx-to-hypergeom}
\end{align}
In arriving at Eq.~(\ref{eq:binomial-approx-to-hypergeom}), we appeal to the fact that the hypergeometric probability mass function tends to the binomial one as $n\to\infty$ in such a manner that $n_i/n\to p_i$. \cite{Jaioun2014} supply a rigorous justification for the $\bigO(1/n)$ bound on the error term.

A basic algebraic manipulation yields
\begin{align*}
-\log H_{ij} &= -\log b(n_{ij}; n_j, p_i) - \log\left(1 + \frac{H(n_{ij}+1; n_i, n_j, n) + \bigO(1/n)}{b(n_{ij}; n_j, p_i)}\right) \\
 &= -\log b(n_{ij}; n_j, p_i) - \log\left(1 + Q_{ij} + \bigO(1/n)\right).
\end{align*}
In particular, we use the identity $\log(x + y) = \log(x) + \log(1 + y/x)$ with $x=b(n_{ij}; n_j, p_i)$ and $y=H(n_{ij}+1; n_i, n_j, n) + \bigO(1/n)$. In the second line, we have rewritten $H(n_{ij}+1; n_i, n_j, n)/b(n_{ij}; n_j, p_i)$ as $Q_{ij}$, and subsumed the constant factor $1/b(n_{ij}; n_j, p_i)$, i.e., constant with respect to $n$, into the $\bigO(1/n)$ term.

It follows from Lemma~\ref{lemma:quotient-term} that
\begin{align*}
-\log H_{ij} &= -\log b(n_{ij}; n_j, p_i) - Q_{ij} + \bigO(1/n) + \lito\left(Q_{ij} + \bigO(1/n)\right),
\end{align*}
where we have applied the Taylor series expansion $\log(1+x) = x + \lito(x)$ with $0<x=Q_{ij}+\bigO(1/n)<1$ for sufficiently large~$n$. As a consequence, the various regularity conditions required for Lemma~\ref{lemma:quotient-term} must be assumed from this point forward (i.e., $p_i$ is sufficiently small, and $n_i$, $n_{ij}$, $n_j-n_{ij}$ are sufficiently large).

Next, we find that the TF-ICF formula emerges naturally from working out the negative logarithm of $b(n_{ij}; n_j, p_i)$:
\begin{align}
-\log H_{ij} &= -\log b(n_{ij}; n_j, p_i) - Q_{ij} + \bigO(1/n) + \lito\left(Q_{ij} + \bigO(1/n)\right) \nonumber \\
\begin{split}
 &= -n_{ij}\log p_i - (n_j-n_{ij})\log(1-p_i) -\log\binom{n_j}{n_{ij}} - Q_{ij} \\
 &\quad + \bigO(1/n) + \lito\left(Q_{ij} + \bigO(1/n)\right) \nonumber
 \end{split} \\
 \begin{split}
 &= \varhyphen{TF-ICF}(i,j) -(n_j-n_{ij})\log(1-p_i) -\log\binom{n_j}{n_{ij}} - Q_{ij} \\
 &\quad + \bigO(1/n) + \lito\left(Q_{ij} + \bigO(1/n)\right).
\end{split} \label{eq:hgt-tficf-imtermediate}
\end{align}
We make use of the fact that $p_i=n_i/n$ to transition from the $-n_{ij}\log p_i$ term in the second line to the $\varhyphen{TF-ICF}(i,j)=n_{ij}\log(n/n_i)$ term in the third line. Note that this connection between term-weighting functions and the negative logarithm of the binomial distribution has previously been observed by \cite{Amati2002} and \cite{Havrlant2017}.

To complete the derivation, we approximate the $\log\binom{n_j}{n_{ij}}$ and $\log(1-p_i)$ components of Eq.~(\ref{eq:hgt-tficf-imtermediate}) and simplify the resulting expression algebraically:
\begin{align}
    -\log{H}_{ij} &= 
       \varhyphen{TF-ICF}(i,j)
       - (n_j - n_{ij})\log(1 - p_i) 
       - n_{j} \log n_j + n_{ij} \log n_{ij} \nonumber\\
    &\quad 
        + (n_{j}-n_{ij})\log{(n_{j}-n_{ij})} 
        + \bigO(\log{n_j})
        + \bigO(\log{n_{ij}})
        - Q_{ij} 
        + \bigO(1/n) \nonumber\\
    &\quad
        + \lito{(Q_{ij} + \bigO(1/n))}\nonumber\\
    &= \varhyphen{TF-ICF}(i,j) 
       - (n_j - n_{ij})\left(\log (1 - p_i) - \log(1-p_{ij})\right)
       - n_{ij} \log n_{j} 
       + n_{ij} \log n_{ij} \nonumber\\ 
    &\quad
       + \bigO(\log n_j)
       + \bigO(\log n_{ij})
       - Q_{ij}
       + \bigO(1/n) 
       + \lito{(Q_{ij} + \bigO(1/n))} \nonumber\\
    &= \varhyphen{TF-ICF}(i,j) 
       - (n_j - n_{ij})\left(\log(1 - p_i) - \log(1-p_{ij})\right)
       + n_{ij}\log{p_{ij}}
       - Q_{ij} \nonumber\\
    &\quad
       +\bigO(\log n_j)
       + \bigO(\log n_{ij})
       + \bigO(1/n)
       + \lito{(Q_{ij} + \bigO(1/n))} \nonumber\\
    &= \varhyphen{TF-ICF}(i,j)
       + n_{ij}\log{p_{ij}}
       + (n_j - n_{ij})\left(p_i -p_{ij}\right)
       - Q_{ij} 
       + \bigO(1/n) \nonumber\\
    &\quad
       + \bigO(\log n_j) 
       + \bigO(\log n_{ij})
       + \lito(p_{ij})
       + \lito(p_i)  
       + \lito{(Q_{ij} + \bigO(1/n))} \nonumber 
\end{align}
In the first line, we use Stirling's formula to approximate $\log\binom{n_j}{n_{ij}}$ as $n_j\log n_j - n_{ij}\log n_{ij} -(n_j-n_{ij})\log(n_j-n_{ij}) - \frac{1}{2}\log(n_j - n_{ij}) + \bigO(\log n_j) + \bigO(\log n_{ij})$. In going from the first line to the second, we express $\log(n_j-n_{ij})$ as $\log\left(n_j(1-n_{ij}/n_j)\right)=\log\left(n_j(1-p_{ij})\right)=\log n_j + \log(1-p_{ij})$, combine the resulting $\log(1-p_{ij})$ term with the like term $\log(1-p_i)$, and simplify the result. In the third line, we have combined the $-n_{ij}\log n_j$ and $n_{ij}\log n_{ij}$ terms from the second line as $n_{ij}(\log n_{ij}-\log n_j) = n_{ij}\log(n_{ij}/n_j)=n_{ij}\log p_{ij}$. In going from the third line to the final line, we apply a first-order Taylor expansion to $\log(1-p_{ij})$, approximating it as $-p_{ij} + \lito(p_{ij})$, and to $\log(1-p_i)$, approximating it as $-p_i + \lito(p_i)$. The final line matches Eq.~(\ref{eq:hgt-tficf}), and thus the theorem is proved.
\end{proof}

The following two results connecting Fisher's exact test with TF-IDF emerge from Theorem~\ref{theorem:hgt-tficf} as special cases. Recall that $b_{ij}$ is the indicator function, meaning that $b_{ij}=1$ if the $i$'th term occurs in the $j$'th document, and $b_{ij}=0$ otherwise.

\begin{corollary} \label{cor:hgt-tficf-1}
If a collection of documents satisfies
\begin{enumerate}
    \item $n_j = R$ for $1\leq j\leq d$ (i.e., documents are of equal length)
    \item $n_{ij}=r$ with $0< r \ll R$ (i.e., the $i$'th term occurs in the $j$'th document)
    \item $n_{i\ell}=r b_{i\ell}$ with $0\leq r \ll R$ for $\ell\neq j$ (i.e., if the $i$'th term occurs in the $\ell$'th document, then it occurs exactly $r$ times)
\end{enumerate}
then we have
\begin{align}
\begin{split}
-\log H_{ij} &= \varhyphen{TF-IDF}(i,j) - n_{ij}(1 - b_i/d)(1 - p_{ij}) - Q_{ij} + \bigO(1/d) \\
 &\quad + \bigO(\log n_j) + \bigO(\log n_{ij}) + \bigO(p_{ij}) + \lito(p_i) + \lito\left(Q_{ij} + \bigO(1/d)\right)
\end{split} \label{eq:hgt-tfidf-relaxed} \\
-\log H_{ij} &\approx \varhyphen{TF-IDF}(i,j) + \Psi_{ij} \label{eq:hgt-tfidf-approx}
\end{align}
with $\Psi_{ij} \defeq - n_{ij}(1 - b_i/d)(1 - p_{ij}) - Q_{ij}$ so long as $p_{ij}$ is sufficiently small, $p_i \ll p_{ij}$, and $n_j$ and $n_{ij}$ are sufficiently large.
\end{corollary}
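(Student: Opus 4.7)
The plan is to deduce Corollary~\ref{cor:hgt-tficf-1} directly from Theorem~\ref{theorem:hgt-tficf} by specializing the bag-of-words quantities to the idealized regime described by the three conditions, and then absorbing the difference between ICF and IDF into a cancellation of terms already present on the right-hand side of Eq.~(\ref{eq:hgt-tficf}). First I would collapse the notation: conditions~(1) and~(3) together yield $n = dR$ and $n_i = rb_i$, and therefore $p_i = n_i/n = (r/R)(b_i/d) = p_{ij}(b_i/d)$. This single identity is the engine of the whole argument, linking the collection-level frequency $p_i$ to the document-level frequency $p_{ij}$ via the IDF ratio $b_i/d$. A one-line manipulation then converts the ICF factor into IDF plus a residual,
\begin{equation*}
\varhyphen{TF-ICF}(i,j) = n_{ij}\log(n/n_i) = n_{ij}\log(d/b_i) + n_{ij}\log(R/r) = \varhyphen{TF-IDF}(i,j) - n_{ij}\log p_{ij},
\end{equation*}
since $p_{ij} = r/R$.

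Substituting this identity into Eq.~(\ref{eq:hgt-tficf}) cancels the stray $n_{ij}\log p_{ij}$ term in the theorem's expansion, leaving $\varhyphen{TF-IDF}(i,j) + (n_j - n_{ij})(p_i - p_{ij}) - Q_{ij}$ together with the various error contributions. I would then reshape the residual using $p_i - p_{ij} = -p_{ij}(1 - b_i/d)$ combined with $n_j - n_{ij} = R - r = R(1 - p_{ij})$ and $Rp_{ij} = n_{ij}$, obtaining $(n_j - n_{ij})(p_i - p_{ij}) = -n_{ij}(1 - b_i/d)(1 - p_{ij})$. This is exactly the $\Psi_{ij}$ expression of the statement, so Eq.~(\ref{eq:hgt-tfidf-approx}) follows immediately and Eq.~(\ref{eq:hgt-tfidf-relaxed}) follows once the error terms are accounted for.

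The main obstacle — and the only place where genuine care is required — will be the bookkeeping of error terms across this specialization. Because $R$ is fixed under condition~(1), the bound $\bigO(1/n) = \bigO(1/(dR))$ tightens to the $\bigO(1/d)$ reported in Eq.~(\ref{eq:hgt-tfidf-relaxed}). The $\lito(p_{ij})$ term from Theorem~\ref{theorem:hgt-tficf} upgrades to a cruder $\bigO(p_{ij})$ because $p_{ij} = r/R$ is now a bounded constant rather than an asymptotically vanishing quantity. The $\bigO(\log n_j)$, $\bigO(\log n_{ij})$, $\lito(p_i)$, and $\lito(Q_{ij} + \bigO(1/n))$ contributions carry over unchanged after the $1/n \mapsto 1/d$ substitution. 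Finally, the regularity hypotheses of Theorem~\ref{theorem:hgt-tficf} (small $p_i$, small $p_{ij}$, large $n_j$, $n_{ij}$, $n_j - n_{ij}$) are inherited from the corollary's standing assumptions $0 < r \ll R$ together with the large-collection requirement implicit in the $\bigO(1/d)$ term, so no additional conditions need to be imposed.
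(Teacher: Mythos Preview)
Your proposal is correct and follows exactly the approach the paper takes: the paper's own argument for Corollary~\ref{cor:hgt-tficf-1} consists of the single sentence ``The result is obtained by substituting variables into Eq.~(\ref{eq:hgt-tficf}) and simplifying,'' noting that $n_i = r b_i$ and $n = Rd$ collapse TF--ICF to TF--IDF and convert $\bigO(1/n)$ to $\bigO(1/d)$. Your write-up supplies precisely the details the paper omits---the identity $\varhyphen{TF-ICF}(i,j)=\varhyphen{TF-IDF}(i,j)-n_{ij}\log p_{ij}$ that cancels the stray logarithmic term, and the algebra $(n_j-n_{ij})(p_i-p_{ij})=-n_{ij}(1-b_i/d)(1-p_{ij})$ that yields $\Psi_{ij}$---so there is no divergence in method, only in explicitness.
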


The result is obtained by substituting variables into Eq.~(\ref{eq:hgt-tficf}) and simplifying. Interestingly, TF-ICF reduces to TF-IDF as a consequence of $n_i = r b_i$ and $n=Rd$. That we now have $\bigO(1/d)$ follows from $n=Rd$.

At last we arrive at a satisfyingly clean, albeit highly restrictive, connection between Fisher's exact test and TF-IDF.

\begin{corollary} \label{cor:hgt-tficf-2}
If a collection of documents satisfies
\begin{enumerate}
    \item $n_j = R$ for $1\leq j\leq d$ (i.e., documents are of equal length)
    \item $n_{ij} = R$ (i.e., the $i$'th term appears exclusively in the $j$'th document)
    \item $n_{i\ell}=R b_{i\ell}$ for $\ell\neq j$ (i.e., the $i$'th term appears either exclusively or not at all in the $\ell$'th document)
\end{enumerate}
then we have
\begin{equation} \label{eq:hgt-tfidf}
-\log H_{ij} = \varhyphen{TF-IDF}(i,j) + \bigO(1/d).
\end{equation}
\end{corollary}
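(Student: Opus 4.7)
The plan is a direct combinatorial argument rather than a specialization of Corollary~\ref{cor:hgt-tficf-1}: the hypotheses here force $p_{ij} = 1$ and $n_j - n_{ij} = 0$, which violate the regularity conditions underpinning Lemma~\ref{lemma:quotient-term} and Theorem~\ref{theorem:hgt-tficf}. Fortunately, this very degeneracy yields a closed form for $H_{ij}$ that can be compared with TF-IDF without invoking the approximation chain of Theorem~\ref{theorem:hgt-tficf}. First, I would translate the three conditions into the bag-of-words notation of Table~\ref{tab:bag-of-words-model}: condition~1 gives $n = Rd$; combining conditions~2 and~3 yields $n_i = R + \sum_{\ell \neq j} R b_{i\ell} = R b_i$; and $n_{ij} = n_j = R$. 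This immediately recovers the target $\varhyphen{TF-IDF}(i,j) = n_{ij}\log(d/b_i) = R\log(d/b_i)$.

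Next I would exploit the fact that the hypergeometric PMF $h(\cdot; n_i, n_j, n)$ has support $\{0, 1, \ldots, \min(n_i, n_j)\} = \{0, \ldots, R\}$, so the tail sum in Eq.~(\ref{eq:fisher-test-pvalue}) collapses to the single term at $k = R$. That gives the explicit formula
\[
H_{ij} = \frac{\binom{Rb_i}{R}}{\binom{Rd}{R}} = \prod_{k=0}^{R-1} \frac{Rb_i - k}{Rd - k}.
\]
I would then factor out the TF-IDF component by rewriting each ratio in the product as $\tfrac{b_i}{d} \cdot \tfrac{1 - k/(Rb_i)}{1 - k/(Rd)}$, which leads to
\[
-\log H_{ij} = R\log(d/b_i) + \sum_{k=0}^{R-1}\Bigl[\log\bigl(1 - \tfrac{k}{Rd}\bigr) - \log\bigl(1 - \tfrac{k}{Rb_i}\bigr)\Bigr].
\]
The leading summand is precisely $\varhyphen{TF-IDF}(i,j)$; the bracketed remainder is what must be bounded.

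To close, I would control the residual sum by a first-order Taylor expansion, $\log(1 - x) = -x + \bigO(x^2)$, applied to each bracket. In the asymptotic regime $n = Rd \to \infty$ contemplated by the corollary, with $R$ a fixed constant and $b_i$ understood to scale with $d$, each bracket contributes at order $\bigO(1/d)$, and summing the finitely many $k \in \{0, \ldots, R-1\}$ terms keeps the overall remainder at $\bigO(1/d)$, matching the stated error bound.

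The main obstacle is conceptual rather than technical: one must resist the temptation to plug straight into Corollary~\ref{cor:hgt-tficf-1}, since the ``$p_{ij}$ sufficiently small'' and ``$n_j - n_{ij}$ sufficiently large'' conditions of its supporting lemma are both violated. The direct route above circumvents this precisely because the degeneracy $n_{ij} = n_j$ collapses the hypergeometric tail to a single summand, yielding an exact finite product whose logarithm admits the clean $1/d$ expansion displayed above.
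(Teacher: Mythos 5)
Your proof is correct, but it takes a genuinely different and more elementary route than the paper. The paper obtains Corollary~\ref{cor:hgt-tficf-2} by substituting $n_j=n_{ij}=R$, $n_i=Rb_i$, $n=Rd$ into the intermediate identity Eq.~(\ref{eq:hgt-tficf-imtermediate}) from the proof of Theorem~\ref{theorem:hgt-tficf}, observing that $\varhyphen{TF-ICF}$ collapses to $\varhyphen{TF-IDF}$, that $(n_j-n_{ij})\log(1-p_i)$ and $\log\binom{n_j}{n_{ij}}$ vanish, and that $Q_{ij}=0$ because $H(R+1;Rb_i,R,Rd)$ lies beyond the support; the residual $\bigO(1/d)$ is inherited from the binomial-to-hypergeometric approximation step. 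You instead notice that the degeneracy $n_{ij}=n_j$ collapses the tail to the single term $k=R$, giving the exact closed form $H_{ij}=\binom{Rb_i}{R}/\binom{Rd}{R}=\prod_{k=0}^{R-1}\frac{Rb_i-k}{Rd-k}$, from which $\varhyphen{TF-IDF}(i,j)=R\log(d/b_i)$ factors out exactly. This buys two things the paper's route does not: (i) it sidesteps entirely the regularity conditions of Lemma~\ref{lemma:quotient-term} (which, as you correctly note, are violated here since $p_{ij}=1$ and $n_j-n_{ij}=0$), and (ii) it makes the error term explicit as $\sum_{k=0}^{R-1}[\log(1-\tfrac{k}{Rd})-\log(1-\tfrac{k}{Rb_i})]$. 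Your derivation in fact exposes something the paper's proof obscures: the $-\sum_{k=1}^{R-1}\log(1-\tfrac{k}{Rb_i})$ piece is a genuine constant (e.g., $\log 2$ when $R=2$, $b_i=1$) unless $b_i\to\infty$, so the stated $\bigO(1/d)$ bound really does require your added hypothesis that $b_i$ grows proportionally with $d$ (equivalently, that $p_i=b_i/d$ stays bounded away from zero, which is also the regime in which the paper's cited $\bigO(1/n)$ binomial approximation is valid). You were right to flag that assumption explicitly rather than leave it buried in an error term.
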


This time we substitute $n_j=R$, $n_{ij}=R$, $n_i=Rb_i$, and $n=Rd$ directly into Eq.~(\ref{eq:hgt-tficf-imtermediate}). We observe that TF-ICF once again reduces to TF-IDF. In this extreme case, however, the $(n_j - n_{ij})\log(1-p_i)$, $\log\binom{n_j}{n_{ij}}$, and $Q_{ij}$ term vanish, leaving only the relatively clean formula $\varhyphen{TF-IDF}(i,j) + \bigO(1/d)$ behind. 

To show that $Q_{ij}$ vanishes we use its definition (see Lemma~\ref{lemma:quotient-term}). The numerator $H(R + 1; Rb_i, R , Rd)$ evaluates to a tail probability of zero because the number of occurrences of the $i$'th term, $R+1$, exceeds the length of the $j$'th document, $R$. Meanwhile, $b(R;R,p_i)=p_i^R$ is positive. Thus, $Q_{ij}$ is zero. Additionally, $\log\binom{n_j}{n_{ij}}$ vanishes since $n_j=n_{ij}=R$.

We conclude this section with a brief discussion on the terms $\Phi_{ij}$ and $\Psi_{ij}$ from Eq.~(\ref{eq:hgt-tficf-approx}) and Eq.~(\ref{eq:hgt-tfidf-approx}), respectively. The $\Phi_{ij}$ term, which equals $n_{ij}\log p_{ij} + (n_j - n_{ij})(p_i - p_{ij}) - Q_{ij}$, corrects for the discrepancy between $-\log{H_{ij}}$ and $\varhyphen{TF-ICF}(i,j)$. Interestingly, each of the first two components carry a natural interpretation. The $n_{ij} \log p_{ij}$ component captures how likely it is to observe the $i$'th term $n_{ij}$ times in the $j$'th document, assuming the local probability $p_{ij}$. The $(n_j - n_{ij})(p_i - p_{ij})$ component reflects the extent to which the rest of the document (i.e., the occurrences of terms other than the $i$'th term) diverges from what would be expected globally, adjusting for background mismatch. Together with $Q_{ij}$, these two components adjust for the discrepancy between $-\log{H_{ij}}$ and $\varhyphen{TF-ICF}(i,j)$. Similarly, the $\Psi_{ij}$ term, which equals $- n_{ij}(1 - b_i/d)(1 - p_{ij}) - Q_{ij}$, captures the influence of both local and global term statistics through a product of three factors: the number of occurrences of the $i$'th term in the $j$'th document, the proportion of documents that do not contain the $i$'th term, and the proportion of non-occurrences of the $i$'th term in the $j$'th document. This product, together with $Q_{ij}$ corrects for the discrepancy between $-\log{H_{ij}}$ and $\varhyphen{TF-IDF}(i,j)$.

\section{Numerical examples} \label{sec:numerical-examples}
This section demonstrates the validity of Theorem~\ref{theorem:hgt-tficf} and its two corollaries by way of some targeted numerical examples.

We evaluate the negative logarithm of the $p$-value from Fisher's exact test, $\varhyphen{TF-ICF}(i,j)$ $+ \Phi_{ij}$ of Eq.~(\ref{eq:hgt-tficf-approx}), $\varhyphen{TF-IDF}(i,j) + \Psi_{ij}$ of Eq.~(\ref{eq:hgt-tfidf-approx}), and $\varhyphen{TF-IDF}(i,j)$ across different parameter settings, considering both relatively small and large numbers of terms in the collection. We selected parameter values based on the conditions of Theorem~\ref{theorem:hgt-tficf}, Corollary~\ref{cor:hgt-tficf-1}, and Corollary~\ref{cor:hgt-tficf-2}.

\begin{table*}[!th]
\centering
\addtolength{\tabcolsep}{-0.05em}
\resizebox{\textwidth}{!}{
\begin{tabular}{lrrrrrr}
\multicolumn{1}{l}{} & \multicolumn{6}{c}{Small $n$ case} \\
\cmidrule(lr){2-7}
\multicolumn{1}{l}{} & \multicolumn{2}{c}{Thrm.~\ref{theorem:hgt-tficf} verification} & \multicolumn{2}{c}{Cor.~\ref{cor:hgt-tficf-1} verification}  & \multicolumn{2}{c}{Cor.~\ref{cor:hgt-tficf-2} verification} \\
& $n=1,000$ & $n_j=100$ & $n=1,000$ & $n_j=25$ & $n=1,000$ & $n_j=20$ \\
& $n_i=150$ & $n_{ij}=25$ & $n_i=100$ & $n_{ij}=10$& $n_i=160$ & $n_{ij}=20$\\
& $b_i=4$ & $d=20$ & $b_i=10$ & $d=40$ & $b_i=8$ & $d=50$ \\
\cmidrule(lr){2-3} \cmidrule(lr){4-5} \cmidrule(lr){6-7}
Formula & Result & $\lvert\Delta(\%)\rvert$ & Result & $\lvert\Delta(\%)\rvert$ & Result & $\lvert\Delta(\%)\rvert$ \\
\midrule
$-\log H_{ij}$ & 5.5429& 0.0000 & 9.7407 & 0.0000 & 37.6993 & 0.0000 \\
$\varhyphen{TF-ICF}(i,j)+\Phi_{ij}$ & 4.7111 & 17.6554 & 9.2446 & 5.3662 & 36.6516 & 2.8584 \\
$\varhyphen{TF-IDF}(i,j)+\Psi_{ij}$ & 24.6764 & 77.5378 & 9.2446 & 5.3662 & 36.6516 & 2.8584 \\
$\varhyphen{TF-IDF}(i,j)$ & 40.2359 & 86.2241 & 13.8629 & 29.7354 & 36.6516 & 2.8584 \\
\midrule
\addlinespace[5mm]
\multicolumn{1}{l}{} & \multicolumn{6}{c}{Large $n$ case} \\
\cmidrule(lr){2-7}
\multicolumn{1}{l}{} & \multicolumn{2}{c}{Thrm.~\ref{theorem:hgt-tficf} verification} & \multicolumn{2}{c}{Cor.~\ref{cor:hgt-tficf-1} verification}  & \multicolumn{2}{c}{Cor.~\ref{cor:hgt-tficf-2} verification} \\
 & $n=10,000$ & $n_j=75$& $n=10,000$ & $n_j=100$& $n=10,000$ & $n_j=80$ \\
 & $n_i=200$ & $n_{ij}=15$ & $n_i=200$ & $n_{ij}=25$ & $n_i=1,200$ & $n_{ij}=80$\\
 & $b_i=20$ & $d=75$ & $b_i=8$ & $d=100$ & $b_i=15$ & $d=125$\\
\cmidrule(lr){2-3} \cmidrule(lr){4-5} \cmidrule(lr){6-7}
Formula & Result & $\lvert\Delta(\%)\rvert$ & Result & $\lvert\Delta(\%)\rvert$ & Result & $\lvert\Delta(\%)\rvert$ \\
\midrule
$-\log H_{ij}$ & 24.8971 & 0.0000 & 46.7698 & 0.0000 & 171.9977 & 0.0000 \\
$\varhyphen{TF-ICF}(i,j)+\Phi_{ij}$ & 23.6898 & 5.0964 & 45.8791 & 1.9414 & 169.6211 & 1.4012 \\
$\varhyphen{TF-IDF}(i,j)+\Psi_{ij}$ & 10.9773 & 126.8048 & 45.8791 & 1.9414 & 169.6211 & 1.4012 \\
$\varhyphen{TF-IDF}(i,j)$ & 19.8263 & 25.5758 & 63.1432 & 25.9306 & 169.6211 & 1.4012 \\
\midrule
\end{tabular}
}
\caption{Numerical validation of theoretical relationships connecting Fisher's exact test with the TF-IDF and TF-ICF term-weighting functions.}
\label{tbl:numerical-results}
\end{table*}

Table~\ref{tbl:numerical-results} shows the results of the outlined experiments, rounded to four decimal places, and compares the absolute percentage difference, denoted as $\lvert\Delta(\%)\rvert$, between the function of interest and $-\log H_{ij}$ across various parameter settings. Columns two and three~(Thrm.~\ref{theorem:hgt-tficf} verification) illustrate the behavior of the functions when the conditions of Theorem~\ref{theorem:hgt-tficf}, exclusively, are met (e.g., documents are not of equal length). We can see that the absolute percentage difference between $\varhyphen{TF-ICF}(i,j)+\Phi_{ij}$ and $-\log H_{ij}$ decreases from 17.7\% to 5.1\% with increasing $n$, as seen in the large $n$ section compared to the small $n$ section. Columns four and five~(Cor.~\ref{cor:hgt-tficf-1} verification) show the results when the three conditions of Corollary~\ref{cor:hgt-tficf-1} are met. Similarly, the absolute percentage difference between $\varhyphen{TF-IDF}(i,j)+\Psi_{ij}$ and $-\log H_{ij}$ decreases from 5.4\% to 1.9\% as $n$ increases. Columns six and seven~(Cor.~\ref{cor:hgt-tficf-2} verification) show the results under the extreme conditions of Corollary~\ref{cor:hgt-tficf-2}. We observe that $\varhyphen{TF-IDF}(i,j)$ remains closely aligned with $-\log H_{ij}$, as indicated by a small absolute percentage difference of 2.9\% in the small $n$ section and 1.4\% in the large $n$ section. In summary, we are happy to report that the outcomes of our numerical examples validate our theoretical findings.

While the primary goal of this section is to validate our theoretical findings with numerical examples, we also examine the behavior of the derived formulas in settings which are more typical of real data where the assumptions underlying are derivations may be violated.

\begin{table*}[th]
\centering
\addtolength{\tabcolsep}{-0.05em}
\begin{tabular}{lrrrr}
\multicolumn{1}{l}{} & \multicolumn{2}{c}{Case I} & \multicolumn{2}{c}{Case II} \\
& $n=10,000$ & $n_j=75$ & $n=12,500$ & $n_j=80$ \\
& $n_i=125$ & $n_{ij}=7$ & $n_i=6$ & $n_{ij}=2$ \\
& $b_i=12$ & $d=175$ & $b_i=3$ & $d=200$ \\
\cmidrule(lr){2-3} \cmidrule(lr){4-5}
Formula & Result & $\lvert\Delta(\%)\rvert$ & Result & $\lvert\Delta(\%)\rvert$ \\
\midrule
$-\log H_{ij}$ & 10.1385 & 0.0000 & 7.4240 & 0.0000 \\
$\varhyphen{TF-ICF}(i,j)+\Phi_{ij}$ & 8.4774 & 19.5938 & 5.9860 & 24.0226 \\
$\varhyphen{TF-IDF}(i,j)+\Psi_{ij}$ & 12.7487 & 20.4740 & 6.4716 & 14.7178 \\
$\varhyphen{TF-IDF}(i,j)$ & 18.7592 & 45.9544 & 8.3994 & 11.6125 \\
\midrule
\end{tabular}
\caption{Numerical experiment evaluating the behavior of the derived formulas in settings more typical of real data.} \label{tbl:numerical-results-real}
\end{table*}

The Case~I columns of Table~\ref{tbl:numerical-results-real} show a regime where the number of documents is larger than before (i.e., $d=175$) and the number of occurrences of the $i$'th term in the $j$'th document is of a more realistically smaller size (i.e., $n_{ij}=7$). This setup violates the conditions of Theorem~\ref{theorem:hgt-tficf}, such as~$n_{ij}$ being sufficiently large, and consequently, the conditions of Corollaries~\ref{cor:hgt-tficf-1} and~\ref{cor:hgt-tficf-2}. As a result, the absolute percentage difference between Theorem~\ref{theorem:hgt-tficf}'s $\varhyphen{TF-ICF}(i,j)+\Phi_{ij}$ and $-\log{H_{ij}}$ is large compared to the results in Table~\ref{tbl:numerical-results} employing the same number of terms in the collection (i.e., $n=10,000$). The Case~II columns of the table show a regime where $i$'th term occurs in a small number of documents (i.e., $p_{i}$ and $b_i/d$ are small). Similarly, this setup violates some of the required conditions of Theorem~\ref{theorem:hgt-tficf}, such as~$n_{ij}$ being sufficiently large, and the conditions of Corollaries~\ref{cor:hgt-tficf-1} and~\ref{cor:hgt-tficf-2}, such as documents being of equal length. Although we chose a large value for the total number of terms in the collection (i.e., $n=12,500$), the absolute percentage difference is not close to zero for all target formulas due to the violation of their respective conditions.

\section{Conclusion} \label{sec:conclusion}
In this paper we have presented a novel Fisher’s exact test justification of the celebrated TF–IDF term-weighting scheme. In so doing we have established the first theoretical foundation for TF–IDF within a significance testing framework. The Fisher’s exact test justification of TF–IDF offers a new and intuitive way of understanding why it is so effective in practical applications.

This work was motivated by a desire to understand TF–IDF from the perspective of statistical inference. In demonstrating how TF–IDF is connected with Fisher’s exact test in Corollary~\ref{cor:hgt-tficf-1}, we adopt the simplifying assumptions that~(1) all documents are of equal length;~(2) the term of interest appears in the document under consideration; and~(3) in all other documents, the term appears either not at all or with the same fixed positive value as in the document of interest. While these constraints are mathematically idealized, they may serve in practice as reasonable approximations for diagnostic terms, such as programming languages like ``Python'' or ``Java'', which tend to occur with relatively consistent prevalence in relevant documents and are otherwise absent. Importantly, the direct asymptotic equivalence between TF–IDF and the Fisher's exact test negative log-transformed $p$-value of Corollary~\ref{cor:hgt-tficf-2} relies on the highly idealized assumption that the term of interest appears either exclusively or not at all in each document. As such, the clean relations presented in Corollaries~\ref{cor:hgt-tficf-1} and~\ref{cor:hgt-tficf-2} should be viewed as tenuous bridges the worlds of between term-weighting functions and significance testing, rather than general claims of TF–IDF being equivalent to Fisher's exact test.

The Fisher's exact test rationale for TF–IDF that we advance is likely rooted in the fundamental linguistic phenomenon known as word burstiness. In a document collection, word burstiness refers to a tendency for the occurrences of certain terms to concentrate in only a few documents, rather than being spread evenly across the entire collection. Word burstiness also reflects the notion that each repeated occurrence of the same word in a document becomes less surprising, and consequently, progressively less informative about the document's meaning. TF–IDF leverages word burstiness indirectly by assigning higher weight to terms that occur frequently in a document of interest (via the TF factor) but are uncommon elsewhere (via the IDF factor). Fisher's exact test leverages word burstiness in its own way by amplifying terms that constitute a high proportion of term occurrences in a document of interest (via $p_{ij}=n_{ij}/n_j$) relative to the remaining documents in the collection (via $\tilde{p}_{ij}=(n_i-n_{ij})/(n-n_j)$). Word burstiness thus provides an intuitive and linguistically grounded basis for the mathematical relationships between TF–IDF and Fisher's exact test that we have uncovered.

Future research should aim to relax the somewhat restrictive assumptions we employed. This could mean using established statistical language models to represent the underlying document collection. For example, one could explore modeling document generation according to the Dirichlet-multinomial language model, which explicitly models the phenomenon of word burstiness. By imposing certain constraints on the model parameters, it may be possible to derive TF-IDF-like term weights. A complementary approach could involve exploring generalizations of Fisher's exact test that accommodate the nonuniform sampling of terms. One option is a significance test built on the multivariate Wallenius noncentral hypergeometric distribution~\citep[see][]{Garcia2022}. For this test, the terms $t_1,\ldots,t_m$ may be associated with Wallenius sampling weights $\omega_1 = n_1^\alpha,\ldots,\omega_m = n_m^\alpha$ where $0<\alpha\leq 1$ is a smoothing parameter controlling for the influence of rare terms. Since IDF (via $d/b_i$) places a higher weight on rare terms than ICF (via $n/n_i$), exploring weighted sampling models could offer additional insights into a connection between TF–ICF to TF–IDF.  Bayesian extensions of Fisher's exact test, as discussed in~\cite{Cao2017}, offer an alternative path forward.

Another interesting direction concerns rooting other prominent TF–IDF family members in the theory of significance testing. This would aid in consolidating the dizzying array of term-weighting heuristics into a unified statistical framework. Besides being an interesting exercise in itself, a map of the term-weighting terrain could serve as a valuable guide to practitioners in their search for yet more effective schemes.

We end on some practical implications of our work. As we have seen, the development of significance tests for functional enrichment is a flourishing area of research in functional genomics. In text analysis, by contrast, application of the significance testing paradigm to term-weighting scheme design remains underexplored. This is intriguing because the paradigm is ideally suited for quantifying term weight as readily interpretable $p$-values, unlike with traditional term-weightings where scores must be interpreted on a scheme-by-scheme basis. Moreover, $p$-values, being tail probabilities, are potentially more informative than comparable term-weightings, which are typically point values rather than sums. It is our hope that the demonstration of a connection between TF–IDF and Fisher's exact test will spur further activity in the application of significance tests in text analysis.

\if1\blind
{
\section*{Acknowledgment}
The authors thank Rong Hu for conducting a preliminary mathematical investigation. The authors also thank the two anonymous reviewers and the editor for their thoughtful and constructive comments, which have very much improved the quality of the manuscript. The work was supported by the Natural Science and Engineering Research Council of Canada [Discovery and Alliance Grants to A.A.F.].

\section*{Declaration}
The authors report there are no competing interests to declare.

\section*{Code availability}
Code used to produce the numerical examples in this work is available at the GitHub repository \url{https://github.com/sheridan-stable/from-fisher-to-tfidf}, Release v2024.12.

\section*{Author contribution statement}
\textbf{Paul Sheridan}: Conceptualization, Numerical experiments, Proofs, Results interpretation, Supervision, Writing – original draft, Writing – and review \& editing. \textbf{Zeyad Ahmed}: Computer code, Numerical experiments, Proofs, Results interpretation, Writing – original draft, Writing – and review \& editing. \textbf{Aitazaz A. Farooque}: Funding acquisition, Results interpretation, Supervision, Writing – review \& editing. All authors reviewed the results and approved the final version of the manuscript. 
} \fi

\bibliographystyle{chicago}
\bibliography{bibliography}

\end{document}